\theoremstyle{plain}
\newtheorem{theorem}{Theorem}[section]
\theoremstyle{definition}
\theoremstyle{remark}
\definecolor{mygray}{gray}{.92}
\newcommand*\samethanks[1][\value{footnote}]{\footnotemark[#1]}
\title{The Open-World Lottery Ticket Hypothesis for OOD Intent Classification}
\name{Yunhua Zhou\textsuperscript{$1,2$}\sthanks{\ \  Equal contribution.}, 
    Pengyu Wang\textsuperscript{$1$}\samethanks,
    Peiju Liu\textsuperscript{$1$}, 
    Yuxin Wang\textsuperscript{$1$},
    Xipeng Qiu\textsuperscript{$1$}\sthanks{\ \  Corresponding author.}
}
\address{
\textsuperscript{$1$}School of Computer Science, Fudan University \quad
\textsuperscript{$2$}Shanghai AI Laboratory \\
\texttt{zhouyunhua@pjlab.org.cn} \quad
\texttt{xpqiu@fudan.edu.cn} \\
\texttt{\{pywang22, pjliu21, wangyuxin21\}@m.fudan.edu.cn}
}
\abstract{
Most existing methods of Out-of-Domain (OOD) intent classification rely on extensive auxiliary OOD corpora or specific training paradigms. However, they are underdeveloped in the underlying principle that the models should have differentiated confidence in In- and Out-of-domain intent. In this work, we shed light on the fundamental cause of model overconfidence on OOD and demonstrate that calibrated subnetworks can be uncovered by pruning the overparameterized model. Calibrated confidence provided by the subnetwork can better distinguish In- and Out-of-domain, which can be a benefit for almost all~\textit{post hoc} methods.
In addition to bringing fundamental insights, we also extend the Lottery Ticket Hypothesis to open-world scenarios. We conduct extensive experiments on four real-world datasets to demonstrate our approach can establish consistent improvements compared with a suite of competitive baselines. 
 \\ \newline \Keywords{Lottery Ticket, OOD Intent Classification} }
\begin{document}

\maketitleabstract

\section{Introduction}
Interactive Systems, such as Task-Oriented Dialog Systems(TODS), are gradually integrating into and facilitating the daily life of people. However, in open-world scenarios, i.e., the training and test set come from the different distributions or domains, it is often encountered that the expressed intents are reasonable but beyond the domains supported by the Interactive Systems, resulting in mapping the intent to the wrong subsequent processing pipelines. Therefore, Interactive Systems not only need to maintain performance in In-Domain (IND) intents but also need to correctly identify Out-of-Domain (OOD) intents.

\begin{figure}[th]
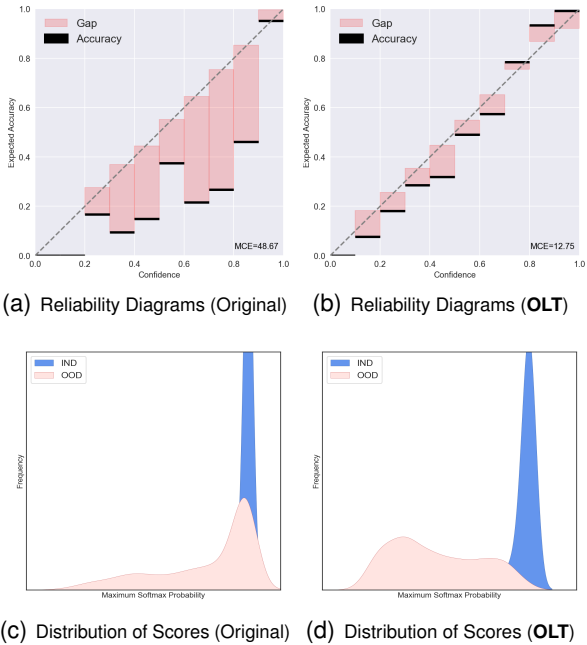

    \centering
    \subfigure[\scriptsize{Reliability Diagrams (Original)}]{
    \includegraphics[width=0.48\linewidth]{stackoverflow_ori_v1.png}\label{r-ori}}
    \subfigure[\scriptsize{Reliability Diagrams (\textbf{OLT})}]{
    \includegraphics[width=0.48\linewidth]{stackoverflow_sub_r.png}\label{r-lottery}}
    \quad
    \subfigure[\scriptsize{Distribution of Scores (Original)}]{
    \includegraphics[height=0.45\linewidth,width=0.48\linewidth]{sub-ori-v1.pdf}\label{d-ori}}
    \subfigure[\scriptsize{Distribution of Scores (\textbf{OLT})}]{
    \includegraphics[height=0.45\linewidth,width=0.48\linewidth]{sub-sub.pdf}\label{d-lottery}}
    \caption{Plots showing (\textbf{Top}) Reliability diagrams and (\textbf{Bottom}) The distribution of In-and Out-of-domain uncertainty scores in the Stackoverflow dataset. The OLT denotes our proposed Open-world Lottery Ticket. The reliability diagrams (pink) are about the function of confidence, which measures the gap (i.e., miscalibration) between expected sample accuracy (black) and confidence. The Maximum Calibration Error (MCE) measures the maximum gap. If a model meets perfect calibration, the gap is zero and the diagrams disappear.}
    \label{fig:r-distribution}
\end{figure}

Recently, to get critical insights into \textit{Does the model knows what it does not know?} i.e., the model should be high-confident in IND and low-confident in OOD (due to unseen),~\citet{hendrycks-etal-2020-pretrained} take a step to show that compared with previous models, such as LSTMs, the confidence scores produced by the Pre-Trained Models'maximum softmax probabilities can significantly distinguish IND and OOD but remain a long way before it is perfect. 

What prevents the confidence of the model from being further trusted? 
Current efforts have primarily concentrated on developing appropriate~\textit{post hoc} (i.e., not involved in training and after training) methods or scoring functions based on maximum softmax probability, such as MSP~\cite{hendrycks-etal-2020-pretrained}, Entropy~\cite{DBLP:journals/corr/abs-2010-03759}, to measure OOD uncertainty. Despite the advancements made, these approaches are inherently limited and lack broad applicability, as the root underlying causes have yet to be thoroughly investigated and understood.

We take a step forward and observe that the maximum softmax probability outputted by the overparameterized model
cannot correctly reflect the confidence of the model, which is known as poor-calibrated~\citep{DBLP:journals/corr/GuoPSW17} and can be visualized~\footnote{ 
\url{https://github.com/hollance/reliability-diagrams}}
by \textbf{reliability diagrams} as shown in Fig.\ref{r-ori}. When encountering open-world scenarios, the unreliable predicted confidence (and other ~\textit{post hoc} measures based on it) given by the poor-calibrated model cannot be measured to uncertain about samples correctly. Furthermore, subsequent analysis (Section~\ref{overconfident-model}) shows that the overparameterized model tends to be overconfident, which is also consistent with the experiment as shown in Fig.\ref{d-ori}. This phenomenon undermines the underlying idea that the model should be much low-confident in OOD and makes it non-trivial to distinguish between IND and OOD. 

In this paper, in addition to giving fundamental insight, we also explore how to calibrate the model to provide reliable confidence. To this end, we first set out to establish the effect of overparameterization in poor calibration and theoretically demonstrate overparameterization would aggravate overconfident predictions on OOD inputs.
Inspired by this, different from the previous work, we do not design a specific simple method to measure OOD uncertainty. Instead, through masking the parameters that are not of interest to the target task,
we prune a calibrated subnetwork from an overparameterized Pre-Trained model during training, which has more general reliable confidence to better differentiate IND and OOD and can be a benefit for almost all~\textit{post hoc} methods.

Especially, beyond the established awareness that temperature scaling can help improve calibration in the~\textit{post hoc} phase empirically~\citet{DBLP:journals/corr/GuoPSW17}, we contribute a new general insight on temperature scaling in open-world scenarios and theoretically demonstrate temperature scaling can substantially differentiate IND and OOD.

Going further, combined with the above calibration of subnetwork and~\textit{post hoc} measure (temperature scaling adopted in this paper), we can further generalize the Lottery Ticket Hypothesis~\cite{DBLP:conf/iclr/FrankleC19} to the open-world. The \textbf{O}pen-world \textbf{L}ottery \textbf{T}icket \textbf{H}ypothesis (OLTH) is articulated as:

\textsl{An initialized overparameterized neural network contains a winning subnetwork---through one-shot pruning and minor post-processing, which can match the commensurate performance in IND identification as original, but also better detect OOD at a commensurate training cost as the original.}

Compared with the original Lottery Ticket Hypothesis, we generate \textbf{O}pen-world \textbf{L}ottery \textbf{T}icket (OLT) through one-shot pruning without iterative pruning. The OLT could be better-calibrated, as shown in Fig.\ref{r-lottery} and not only guarantees the precision of IND recognition but can better distinguish between IND and OOD, as shown in Fig.\ref{d-lottery}, signifying its adaptability to the open-world. Extensive experiments are conducted on four real-world datasets and further verify our hypothesis. Our contributions and insights are as: \\
\textbf{(Theory)} We establish the effect of overparameterization in overconfidence and demonstrate that the well-calibrated confidence of the subnetwork can help improve OOD detection. Furthermore, we empirically extend the LTH--we can identify a lottery ticket from the overparameterized model that is more suitable for the open-world setting.\\
\textbf{(Methodology)} 
We propose a one-shot (without Iterative) Magnitude Pruning to uncover the lottery ticket of interest to the target task, which has more general reliable confidence to better differentiate IND and OOD and can be a benefit for almost all~\textit{post hoc} uncertainty measurements. \\
\textbf{(Experiments)} Extensive experiments and analysis show that our method can improve OOD detection on the premise of the accuracy of IND recognition, which confirms the correctness of the Open-world Lottery Ticket Hypothesis.\footnote{Codes is publicly available at: \url{https://github.com/zyh190507/Open-world-Lottery}}

\section{Related Work}
\label{related-work}
There are two types of work close to our research--Out-of-domain Detection and Sparse Network.\\
\textbf{Out-of-domain Detection} This kind of research mainly focuses on how to design appropriate scoring functions to detect OOD.~\citet{DBLP:conf/iclr/HendrycksG17} adopt the maximum softmax probability (\textbf{MSP}) and provide several baselines for the follow-up research.~\citet{DBLP:conf/iclr/LiangLS18} (\textbf{ODIN}) add small perturbations to inputs and temperature to softmax score based on maximum softmax probability.~\citet{DBLP:journals/corr/abs-1807-03888} detect OOD samples by calculating the \textbf{Mahalanobis distance} between the sample and the different In-domain distributions.~\citet{9052492} distinguish IND and OOD by \textbf{Entropy} calculated on softmax probability.
~\citet{DBLP:journals/corr/abs-2010-03759} regard the \textbf{Energy} score calculated from output logits as a better scoring function.~\citet{DBLP:conf/nips/SunGL21} propose a simple~\textit{post hoc} OOD detection method by rectifying the activations (\textbf{ReAct}) output in the penultimate layer of model.~\citet{hendrycks2019scaling} propose that under large-scale and real-world settings, taking \textbf{MaxLogit} as the scoring function is better than maximum softmax probability. \\

\textbf{Sparse Network} Our approach is also inspired by the work related to sparse networks.~\citet{DBLP:journals/corr/abs-1712-01312} prune the network by adding $\mathcal{L}_{0}$ norm regularization on parameters.~\citet{DBLP:conf/iclr/FrankleC19} propose the Lottery Ticket Hypothesis--a subnetwork (winning ticket) with comparable performance as the original network can be uncovered from the randomly initialized overparameterized network at the same (or no more than) cost of the training original network.~\citet{DBLP:conf/icml/ZhangAX0C21,zheng-etal-2022-robust} propose that the structure of the model is related to the spurious correlation and an unbiased substructure can be found from the biased model. Based on~\citet{DBLP:journals/corr/abs-1712-01312},~\citet{cao-etal-2021-low} can search for various subnetworks that perform the various linguistic tasks of interest.

\section{Proposed Method}
\subsection{Problem Statement}
OOD intent classification usually adheres to the following paradigm: Denote $Y:=\{1,$\dots$,k\}$ as the pre-defined intent set in the TODS where $k$ is the number of intents and $\mathbb{X}$ as the whole input space. For an utterance $x \in \mathbb{X}$, the logits about intents can be output through a neural network $\mathcal{F}:\mathbb{X} \to R^{[Y]}$. 
 A desirable scoring function (also known as decision function) $\mathcal{G}$, which can detect OOD intent while ensuring the accuracy of the identification of known intents, is the objective of OOD intent classification. The prediction can be formed as:
\begin{equation}
    \hat{Y} = \left\{ 
    \begin{aligned}
    & \text{OOD}, & \mathcal{G}(x, \mathcal{F}) < \theta, \\
    & \text{argmax}_{k \in [Y]}\phi_{k}(\mathcal{F}(x)), & \mathcal{G}(x, \mathcal{F}) \ge \theta.
    \end{aligned}
    \right.,
\end{equation}
where $\phi$ is a function of logits (e.g., Softmax). The threshold $\theta$ is used to distinguish IND ($>= \theta$) and OOD ($< \theta$) according to the scores of the decision function. The typical selection of threshold value needs to ensure high accuracy (e.g., 95\%) of identifying IND.

\subsection{Lottery Tickets Less Overconfident}
\label{overconfident-model}
\citet{DBLP:conf/iclr/FrankleC19} put forward the Lottery Ticket Hypothesis (LTH). In short, a winning ticket $S^{+}$ related to the target task can be identified from a randomly initialized overparameterized network $\Omega$, and the remaining of the network is denoted as $S^{-}$. The relationship between the posterior modeled by the overparameterized network and the posterior modeled by the winning ticket can be formulated as:
\begin{equation}
    \small
    \begin{aligned}
        p(Y|X, \Omega, \xi) &= p(Y|X, S^{\tiny{+}}, S^{\tiny{-}},\xi) \\
                            &= p(Y|X, S^{+}, \xi)\cdot \frac{\tiny{p(S^{-}|X,Y,S^{+},\xi)}}{\tiny{p(S^{-}|X,S^{+},\xi)}},                        
    \label{eq:overparam}
    \end{aligned}
\end{equation}
where $\xi$ is metadata, including target task, training methods, datasets, etc.

Take a closer look at the right hand of Eq.(\ref{eq:overparam}). Given that $S^{+}$, $S^{-}$ are structurally linked and jointly optimize the objective loss supervised by $Y$, it is crucial to note that $S^{-}$ and $Y$ are often actually not independent, but rather often establish a certain spurious positive correlation further exacerbated by the intrinsic bias brought by annotation in the training set, which can be expressed as:
\begin{equation}
    p(S^{-}|X,Y,S^{+},\xi) >= p(S^{-}|X,S^{+},\xi).
    \label{overconfidentforumn}
\end{equation}
We also provide a heuristic proof below.

According to Bayes' theorem, the $p(S^{-}|X,S^{+},\xi)$ can be calculated as follows:
 \begin{equation}
    \small
    p(S^{-}|X,S^{+},\xi) = \sum_{T \in \mathcal{T}} p(S^{-}|X,T,S^{+}, \xi)p(T|X, S^{+}, \xi),
    \label{p-s}
\end{equation}
where $\mathcal{T}$ is the space of target tasks related to dataset $D$ contained in $\xi$, $X$ is the input space of samples in $D$, $Y$ is the specific target task defined by $D$ ($Y \in \mathcal{T}$). The definition of parameters remains consistent with the previous context.

Since $D$ is generally collected for a specific type target task, i.g.,$Y$, $X$ could not act on other type tasks $T \in \mathcal{T} - \{Y\}$ and $S^+$ is a subnetwork defined by $Y$ according to the previous condtions, it can be inferred as follows:
 \begin{equation}
    \forall T \in \mathcal{T} - \{Y\}, p(T|X, S^{+}, \xi) \rightarrow 0.
    \label{zero-task}
\end{equation}
Therefore, take Eq.\eqref{zero-task} into Eq.\eqref{p-s} to get the following expression:

\begin{equation}
    \begin{aligned}
    p(S^{-}|*) &= \underbrace{p(S^{-}|X,Y,S^{+}, \xi)p(Y|X, S^{+}, \xi)}_{Y \in \mathcal{T}} \\
                         &+ \underbrace{0 + \cdots + 0 + \cdots}_{Y \in  \mathcal{T} - \{Y\}} \\
               &= p(S^{-}|X,Y,S^{+}, \xi)p(Y|X, S^{+}, \xi)      
    \end{aligned}
    \label{con}
\end{equation}
According to the Eq.\eqref{con}, the following can be obtained:
\begin{equation}
    \begin{aligned}
    p(S^{-}|X,Y,S^{+}, \xi) &= \frac{p(S^{-}|X,S^{+},\xi)}{p(Y|X, S^{+}, \xi)}\\
                            &>= p(S^{-}|X,S^{+},\xi)             
    \end{aligned}
    \label{con-1}
\end{equation}
Therefore, Eq.(\ref{eq:overparam}) can be further calculated as:
 \begin{equation}
    p(Y|X, S^{\tiny{+}}, S^{\tiny{-}},\xi)  >= p(Y|X, S^{+}, \xi).
\end{equation}
The above expression shows that the overparameterized network prefers to be more overconfident than the winning ticket, which results in giving high confidence to OOD samples, making it difficult to distinguish between IND and OOD.
\subsection{The Road to Open-would Lottery Tickets}
\label{find-lottery-ticket}
It is worth noting that the origin Lottery Ticket Hypothesis is only suitable for the closed-world (i.e., the training and test set come from the same distribution). Further, we extend this hypothesis to the open-world setting--Through one-shot pruning and minor post-processing, we can find luckier winning tickets, which can not only ensure the accuracy of IND intent identification but also better detect OOD intent with the original initialization.\\
\textbf{Backbone and IND Identification} We choose Pre-Trained Model BERT \cite{DBLP:conf/naacl/DevlinCLT19}, represented by $\mathcal{F}(x;\theta)$ with initialization $\theta_{0}$, as the backbone network. To enable the model to effectively identify IND intent, we finetune $\mathcal{F}(x;\theta)$ under the supervision of softmax cross-entropy as suggested in~\citet{zhou-etal-2022-knn}. 
The objective $\mathcal{L}_{ce}$  can be formed as:
\begin{align}
    \label{eq:ce}
    \mathcal{L}_{\text{ce}}(\theta) =  -\frac{1}{N}\sum_{i=1}^{N}\log\frac{\exp(\mathcal{F}_{y_{i}}(z_i))}{\sum_{j=1}^{[Y]}\exp(\mathcal{F}_{j}(z_i))},
\end{align}
where $y_i$ is the label of sample $z_i$, \begin{math}\mathcal{F}_{j}(z_i)\end{math} denotes the logit of the $j^{th}$ class and $\theta$ denotes parameters.\\
\textbf{Seek Parameters Need to be Masked} Different from Iterative Magnitude Pruning (IMP), we generate lottery tickets through one-shot pruning. To this end, inspired by~\citet{DBLP:journals/corr/abs-1712-01312}, we also add a binary ``gate'' to each parameter in the model to determine whether the parameter is of interest to the target task. Specifically, with a Pre-Trained Model $\mathcal{F}(x;\theta)$ at hand, the subnetwork is generated by $\mathcal{F}(x;\theta \odot M)$, where $M \in \{0, 1\}^{|\theta|}$ denotes the ``gates''and $|\theta|$ is the size of the parameters.

However, due to $M$ being a discrete (non-differentiable) binary and the exponential combinatorial property of $2^{|\theta|}$, it cannot be optimized normally. Following~\citet{DBLP:journals/corr/abs-1712-01312}, we put Bernoulli distribution over the entry $m_{i} \sim Bern(\pi_{i})$, where $m_{i} \in M$ and $\pi_{i} = Pr(m_{i}=1)$. In addition to the convenience of optimization, the purpose of introducing random variables is that if the probability of a parameter, i.e., $\pi_{i}$ is too small, we can consider that the parameter is not strongly related to the task, which means it can be ``masked''. The framework of optimization can be defined as:
\begin{align}
    \label{eq-mask-ori}
    \mathcal{L}_{\text{mask}}(\mathlarger{\mathlarger{\bm{\pi}}}) = \text{E}_{q(M;\pi)}[\mathcal{L}_{ce}(\theta \odot M)] + \mathcal{R}(\mathlarger{\bm{\pi}})
\end{align}
where $\mathcal{L}_{ce}$ stands for above loss in Eq.~\eqref{eq:ce} but the parameter to be optimized has changed from $\theta$ to \textbf{$\pi$}. The $\mathcal{R}(\mathlarger{\bm{\pi}})$ is a regularization term w.r.t. parameters $\mathlarger{\bm{\pi}}$. The regularization term can have different forms for different purposes. Here we adopt $L_0$ regularization to encourage sparsity.

To further optimize the first term of Eq.~\eqref{eq-mask-ori} (which cannot be optimized based on gradient due to the discrete nature of $M$), following~\citet{DBLP:journals/corr/abs-1712-01312}, we ``smooth'' the Eq.~\eqref{eq-mask-ori}. With the help of the uniform distribution $\mathcal{U}$(0, 1) and the binary concrete continuous random variable $s_i$ which is distributed in the (0,1) interval, we can reparameterize ($\mathcal{H}$) the $M$ and an entry $m_i \in M$ can be reparameterized as follows:
\begin{align}
    \label{eq:reparam}
    & u_i \sim \mathcal{U}(0, 1),\\
    \label{eq:reparam-1}
    & s_i = \text{Sigmoid}((\log u_i/(1 - u_i) + \alpha_i) / \beta), \\
    & m_i = \min(1, \max(0, s_i(\zeta - \gamma) + \gamma)),
\end{align}
where $(\alpha_i, \beta)$ are the parameters of the binary concrete distribution and $(\zeta < 0, \gamma > 1)$ are constants to stretch the distribution interval of $s_i$. Then objective of Eq.\eqref{eq-mask-ori} can be rewritten as:
\begin{equation}
\begin{aligned}
   \label{eq:mask_reparam}
   \mathcal{L}_{mask} & = \text{E}_{u \in \mathcal{U}(0,1)}[\mathcal{L}_{ce}(\theta \odot \mathcal{H}(u, \bm{\alpha}))] \\
                      & + \lambda \cdot \sum_{i=1}^{|\theta|} \text{Sigmoid}(\alpha_i - \beta\log \frac{-\zeta}{\gamma}),
\end{aligned}
\end{equation}
where $\mathcal{H}$ is above reparameterization and $\lambda$ is a hyper-parameter to balance two terms in $\mathcal{L}_{mask}$. 
In practice, we can adopt Monte Carlo as~\citet{DBLP:journals/corr/abs-1712-01312} to the expectation (i.e. the first term) due to reparameterization.\\ 
\textbf{Retrain with Origin Initialization} After the optimization of Eq.~\eqref{eq:mask_reparam} converges or the iteration reaches a certain number of epochs,
for each parameter, the associated probability $\pi$, which can be considered as the degree of correlation between the parameter and the target task, can be output, and mask can be obtained by $M=\mathbb{I}(\mathlarger{\mathlarger{\mathlarger{\bm{\pi}}}} \ge \mu)$, where $\mathbb{I}$ is indicator function and $\mu$ is the threshold to filter parameters. Finally, assign to the unmasked parameters original initial values in $\theta_{0}$ and retrain the model with new initialization $\hat{\theta}_{0}=\theta_{0}\odot M$.

\subsection{OOD Detection with Lottery Tickets}
\label{lottery-tickets}
To better explore the ability of lottery tickets to detect OOD, we just take maximum softmax probability as scoring function and do not select relatively complex OOD scoring functions, such as \textit{Energy}, \textit{ReAct}, and so on (we also demonstrate our lottery tickets can be well compatible with these downstream detection functions in Section~\ref{compatible-scoring}). However, we will carry out temperature scaling on the logits before that.
We will further demonstrate the effectiveness of temperature scaling in theory.\\
\textbf{What is Temperature Scaling?} The temperature scaling is just a simple extension of the softmax score. Its definition is as follows:
\begin{align}
    \label{eq:tempreature}
    & \mathcal{S}_{i}(x;T) = \frac{\exp(\phi_{i}(x)/T)}{\sum_{j=1}^{k}\exp(\phi_{j}(x)/T)},
\end{align}
where T (usually T $>$ 1) is called the temperature. In Section~\ref{temperature-scaling}, we will analyze it in detail.\\
\textbf{Why is Temperature Scaling?} In addition to calibration, we observed an interesting and common phenomenon (also mentioned in the computer vision field~\cite{hendrycks2019scaling,DBLP:conf/iclr/LiangLS18}). For a pair of indistinguishable IND and OOD samples, excluding the maximum logit score, we find that the remaining logit scores for the IND sample are more uneven (see strict measurement in following prove) than that for the OOD sample.
The intrinsic lie in that the general characteristics of the intent of the IND sample are similar to another (or more) intent and significantly different from other intents. 
Those similar intents will be assigned high confidence, while other (different) intents would be given relatively low confidence, especially when the number of intents is large~\cite{hendrycks2019scaling}. This will cause the confidence of ground truth intent to be dispersed by similar intents, making the model misidentify as OOD.

Different from the previous empirical demonstration, 
in the following proposed theorem, we theoretically demonstrate why temperature scaling (just needs to be greater than 1) can differentiate In- and Out-of-Domain based on the above properties and bring new insights into Temperature Scaling. 
\begin{theorem}
\label{theorem}
Let $x_{A}\in D_{\text{IND}}$ and $x_{B}\in D_{\text{OOD}}$ be from IND and OOD respectively, the logits outputed by pre-trained model $\mathcal{F}$ are $\bm{\phi}_{A}=\{a_1,...,a_k\}$ and $\bm{\phi}_{B}=\{b_1,...,b_k\}$ respectively. Suppose $a_1=\max\phi_{A}$ and $b_1=\max\phi_{B}$ and the probabilities of both are equal after softmax, i.e., $S_{1}(x_{A};T=1)$=$S_{1}(x_{B};T=1)$. Under the condition that the distribution of $\phi_{A}-\{a1\}$ is more uneven than that of $\phi_{B}-\{b1\}$, after temperature scaling, $S_{1}(x_{A};T>1) \geq S_{1}(x_{B};T>1)$.
\end{theorem}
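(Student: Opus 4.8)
The plan is to collapse the claimed comparison of the two top-class probabilities into a single scalar inequality about sums of exponentials, and then recognize that inequality as an instance of concavity (Schur-concavity / Karamata). First I would pass to ``gap form.'' Writing $d_j^A = a_j - a_1 \le 0$ and $d_j^B = b_j - b_1 \le 0$ for $j \ge 2$, and dividing numerator and denominator of Eq.~(\ref{eq:tempreature}) by $\exp(a_1/T)$ (resp.\ $\exp(b_1/T)$), gives
\[
  S_1(x;T) = \frac{1}{1 + \sum_{j \ge 2} \exp(d_j/T)} .
\]
Since this is strictly decreasing in the sum, the target $S_1(x_A;T) \ge S_1(x_B;T)$ is \emph{equivalent} to $\sum_{j\ge 2}\exp(d_j^A/T) \le \sum_{j\ge 2}\exp(d_j^B/T)$.

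Second, I would extract the exact content of the $T=1$ hypothesis. Equality of the two softmax scores at $T=1$ says precisely $\sum_{j\ge 2}\exp(d_j^A) = \sum_{j\ge 2}\exp(d_j^B) =: C$. After the substitution $u_j := \exp(d_j) \in (0,1]$ and $p := 1/T \in (0,1)$, the whole theorem reduces to showing
\[
  \sum_{j\ge 2}(u_j^A)^{p} \;\le\; \sum_{j\ge 2}(u_j^B)^{p}
  \qquad\text{subject to}\qquad
  \sum_{j\ge 2}u_j^A = \sum_{j\ge 2}u_j^B = C .
\]
Third, I would make ``more uneven'' precise as a majorization statement, namely that the sorted vector $(u_j^A)_{j\ge 2}$ majorizes $(u_j^B)_{j\ge 2}$, written $u^A \succ u^B$; this is legitimate exactly because the two vectors share the common sum $C$. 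As $x \mapsto x^{p}$ is concave on $(0,\infty)$ for $p \in (0,1)$, the symmetric sum $x \mapsto \sum_j x_j^{p}$ is Schur-concave, so $u^A \succ u^B$ forces $\sum_j (u_j^A)^{p} \le \sum_j (u_j^B)^{p}$, which is the displayed inequality and hence $S_1(x_A;T)\ge S_1(x_B;T)$ for every $T>1$. If one prefers to avoid quoting Schur-concavity, the same conclusion follows by connecting $u^B$ to $u^A$ through a finite chain of Robin-Hood / Muirhead transfers that increase spread and checking that each transfer strictly lowers $\sum_j x_j^{p}$ by concavity.

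The main obstacle I anticipate is not the analytic core, which is a one-line consequence of concavity, but pinning down the right notion of ``uneven.'' The theorem phrases its hypothesis on the raw logits $\phi_A-\{a_1\}$, whereas the inequality that drives the proof lives at the level of the exponentials $u_j=\exp(d_j)$; I would therefore state the promised ``strict measurement'' of unevenness directly as majorization of these exponential (equivalently, normalized non-top softmax) weights, since majorization of logits does not transfer verbatim to their exponentials once the relevant sum is fixed. If only the weaker hypothesis of larger variance is available, I would restrict to $k=3$ (or more generally to vectors that are comparable in the majorization order), where fixed sum together with larger spread already yields $u^A\succ u^B$. I would also treat strictness with care: $p=1$ (that is, $T=1$) recovers the equality hypothesis, while for $T>1$ the inequality is strict unless the two exponential vectors agree up to permutation, so the stated ``$\ge$'' is safe.
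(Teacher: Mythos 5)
Your proposal is correct and follows essentially the same route as the paper: both divide through by the top logit to reduce the claim to comparing $\sum_{j\ge 2}\exp\bigl((x_j-x_1)/T\bigr)$, note that the $T=1$ hypothesis forces these gap sums to be equal, and conclude via concavity of $t\mapsto t^{1/T}$ that the more uneven vector yields the smaller power sum and hence the larger top-class score. Your formalization of ``more uneven'' as majorization of the exponential gaps $u_j=\exp(d_j)$, with Schur-concavity of $u\mapsto\sum_j u_j^{1/T}$ doing the work, is actually tighter than the paper's own treatment, which leaves this step informal through a vague ``local space $\sigma(\bar{X})$ around the even point'' argument; your remark that unevenness must be measured on the exponentials (whose sum is what the $T=1$ condition fixes) rather than on the raw logits pins down exactly the point the paper glosses over.
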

\begin{proof}
According to conditions \\
$\phi_{A}$=$\{a_1,a_2,\cdots,a_k\}$, $\phi_{B}$=$\{b_1,b_2,\cdots,b_k\}$, and $S_{1}(x_{A};T=1)$=$S_{1}(x_{B};T=1)$ we have:
\begin{align}
    \label{eq:assumption-app}
    \frac{\exp(a_1)}{\sum_{j=1}^{k}\exp(a_j)}=\frac{\exp(b_1)}{\sum_{j=1}^{k}\exp(b_j)}
\end{align}
We can get equality by Eq.~\eqref{eq:assumption-app} as:
\begin{align}
    \label{eq:tran-ass-1}
    \mathcal{A}(\phi_{A})&=\mathcal{A}(\phi_{B});\\
    \mathcal{A}(\phi_{A})&=\sum_{j=2}^{k}\exp(a_j-a_1);\\
    \mathcal{A}(\phi_{B})&=\sum_{j=2}^{k}\exp(b_j-b_1).
\end{align}
Now Let us consider introducing Temperature Scaling T(>1), $\mathcal{A}(\phi_{A})$ and $\mathcal{A}(\phi_{B})$ become as:
\begin{align}
    \label{eq:tran-ass}
    \mathcal{A}(\phi_{A},T)&=\sum_{j=2}^{k}(\exp(a_j-a_1))^{\frac{1}{T}};\\
    \mathcal{A}(\phi_{B},T)&=\sum_{j=2}^{k}(\exp(b_j-b_1))^{\frac{1}{T}}.
\end{align}
According to properties of inequalities in~\citet{chen2014brief}, $\sum_{j=2}^{k}\exp(x_j)^{\frac{1}{T}}$ is \textbf{concave} and take maximum value when $\{x_j\}$ is even (equal with each other) denoted as $\bar{X}$.\\
And since the distribution of $\phi_{A}-\{a1\}$ is more uneven than that of $\phi_{B}-\{b1\}$, which can be fomulated as: $\phi_{A}-\{a1\} \in \sigma(\bar{X})$ and $\phi_{B}-\{b1\}$ is out of the range of $\sigma(\bar{X})$ ($\sigma$ is local space spanned by $\bar{X}$ as the center). Combining the properties of \textbf{concave}, we can get $\mathcal{A}(\phi_{A},T)$ $\leq$ $\mathcal{A}(\phi_{B},T)$ and also have:
\begin{align}
    \label{eq:assumption}
    \sum_{j=2}^{k}(\exp(a_j-a_1))^{\frac{1}{T}} &\leq \sum_{j=2}^{k}(\exp(b_j-b_1))^{\frac{1}{T}};\\
    \frac{\exp(a_1)^{\frac{1}{T}}}{\sum_{j=1}^{k}\exp(a_j)^{\frac{1}{T}}}&\geq \frac{\exp(b_1)^{\frac{1}{T}}}{\sum_{j=1}^{k}\exp(b_j)^{\frac{1}{T}}}
\end{align}
Then, that is $S_{1}(x_{A};T>1) \geq S_{1}(x_{B};T>1)$. 
\end{proof}
According to the above full proof, the lead-in of temperature can make full use of such properties, which can effectively cope with such a dilemma.\\
\textbf{Scoring function} Based on the above calibrated softmax score, the definition of score function $\mathcal{G}$ we adopted is as:
\begin{align}
    \label{eq:scoring}
    & \mathcal{G}(x, \mathcal{F}) = \max_{i}\{\mathcal{S}_{i}(x;T)\}.
\end{align}

When the score $\mathcal{G}$ of an utterance is less than a specific threshold $\theta$, it can be regarded as OOD, otherwise, it is IND. As mentioned above, the selection of the threshold needs to ensure the accuracy of the IND. Refer to Section~\ref{evaluation-metrics} for specific metrics.

\section{Experiments}
\subsection{Datasets}
To exhibit the effectiveness and universality of detecting OOD in lottery tickets, we extensively experiment and analysis on three used widely and challenging real-world datasets. \\
\textbf{CLINC-FULL}~\cite{DBLP:conf/emnlp/LarsonMPCLHKLLT19} is dataset that has been annotated and refined manually for evaluating the ability of OOD detection. It has 150 different intents covering 10 various domains and contains 22500 IND samples, 1200 OOD samples respectively. \\
\textbf{CLINC-SMALL}~\cite{DBLP:conf/emnlp/LarsonMPCLHKLLT19} is a variant version of CLINC-FULL and is to measure the ability of OOD dectection of model in the case of insufficient samples. The data also has 150 intents, but each type contains only 50 samples.\\
\textbf{StackOverflow}~\cite{xu2015short} is a public corpus from Kaggle.com. The dataset involves 20 intents, in which the training set, validation set, and test set contain 12000, 2000, and 6000 samples respectively.\\
\textbf{BANKING}~\cite{casanueva2020efficient} It is a dataset about bank-related businesses. Its character is that the number of samples in each category of the dataset is different. The data set includes 77 different categories and the training, and test sets contain 9003, and 3080 respectively. In addition, the validation set also contains 1000 samples.

\begin{table*}[!th]
    \centering
    \begin{tabular}{l | c c c c | c c c c}
    \toprule
     \multirow{3}{*}{\textbf{Methods}} & \multicolumn{4}{c}{\textbf{Clinc-Full}} & \multicolumn{4}{c}{\textbf{Clinc-Small}}\\
    \cmidrule{2-5} \cmidrule{6-9} 
    ~ & ACC & TNR95 & AUROC & AVG. & ACC & TNR95 & AUROC & AVG. \\
    \midrule
    MSP 
    & 91.48\textsubscript{\tiny{0.18}} & 82.27\textsubscript{\tiny{0.78}} & 95.68\textsubscript{\tiny{0.21}} & 89.81
    & 90.19\textsubscript{\tiny{0.06}} & 79.10\textsubscript{\tiny{0.79}} & 95.01\textsubscript{\tiny{0.36}} & 88.10 \\
    MaxLogit 
    & 91.97\textsubscript{\tiny{0.12}} & 85.47\textsubscript{\tiny{0.53}} & 96.02\textsubscript{\tiny{0.26}} & 91.15
    & 90.86\textsubscript{\tiny{0.01}} & 82.87\textsubscript{\tiny{0.33}} & 95.73\textsubscript{\tiny{0.46}} & 89.82 \\
    Energy 
    & 92.01\textsubscript{\tiny{0.18}} & 85.73\textsubscript{\tiny{0.74}}& 96.08\textsubscript{\tiny{0.26}} & 91.27
    & 90.98\textsubscript{\tiny{0.13}} & 84.00\textsubscript{\tiny{0.22}} & 95.83\textsubscript{\tiny{0.47}} & 90.27 \\
    Entropy 
    & 91.61\textsubscript{\tiny{0.20}} & 83.07\textsubscript{\tiny{0.99}} & 95.96\textsubscript{\tiny{0.22}} & 90.21
    & 90.70\textsubscript{\tiny{0.02}} & 82.13\textsubscript{\tiny{0.73}} & 95.41\textsubscript{\tiny{0.38}} & 89.41 \\
    ODIN 
    & 91.99\textsubscript{\tiny{0.10}} & 85.60\textsubscript{\tiny{0.45}} & 96.11\textsubscript{\tiny{0.22}} & 91.23
    & 90.92\textsubscript{\tiny{0.07}} & 83.23\textsubscript{\tiny{0.12}} & 95.87\textsubscript{\tiny{0.38}} & 90.01 \\
    Mahalanbis 
    & 91.94\textsubscript{\tiny{0.23}} & 84.90\textsubscript{\tiny{1.14}} & 96.79\textsubscript{\tiny{0.14}} & 91.21
    & 90.52\textsubscript{\tiny{0.22}} & 81.10\textsubscript{\tiny{0.99}} & 96.26\textsubscript{\tiny{0.04}} & 89.29 \\ 
    \midrule
    \rowcolor{mygray}
    \textbf{OLT(Ours)} & 
    \textbf{92.30}\textsubscript{\tiny{0.10}} & \textbf{86.90}\textsubscript{\tiny{0.50}} & \textbf{96.82}\textsubscript{\tiny{0.14}} & 
    \textbf{92.01} &
    \textbf{91.22}\textsubscript{\tiny{0.07}} & \textbf{84.53}\textsubscript{\tiny{0.25}} & \textbf{96.32}\textsubscript{\tiny{0.09}} &
    \textbf{90.69}\\
    \bottomrule
    \end{tabular}
    \begin{tabular}{l | c c c c | c c c c}
    \toprule
     \multirow{3}{*}{\textbf{Methods}} & \multicolumn{4}{c}{\textbf{Banking}} & \multicolumn{4}{c}{\textbf{Stackoverflow}}\\
    \cmidrule{2-5} \cmidrule{6-9} 
    ~ & ACC & TNR95 & AUROC & AVG. & ACC & TNR95 & AUROC & AVG. \\
    \midrule
    MSP 
    & 79.25\textsubscript{\tiny{1.25}} & 43.73\textsubscript{\tiny{3.95}} & 85.42\textsubscript{\tiny{3.74}} & 69.47
    & 74.95\textsubscript{\tiny{1.15}} & 32.62\textsubscript{\tiny{2.09}} & 89.66\textsubscript{\tiny{0.71}} & 65.74 \\
    MaxLogit 
    & 80.28\textsubscript{\tiny{2.28}} & 48.73\textsubscript{\tiny{7.96}} & 86.40\textsubscript{\tiny{1.41}} & 71.80
    & 75.25\textsubscript{\tiny{1.27}} & 33.47\textsubscript{\tiny{2.38}} & 90.08\textsubscript{\tiny{0.94}} & 66.27 \\
    Energy 
    & 79.71\textsubscript{\tiny{3.00}} & 47.11\textsubscript{\tiny{10.45}}& 86.07\textsubscript{\tiny{1.73}} & 70.96
    & 75.01\textsubscript{\tiny{1.41}} & 32.58\textsubscript{\tiny{3.15}} & 90.11\textsubscript{\tiny{0.97}} & 65.90 \\
    Entropy 
    & 80.18\textsubscript{\tiny{1.29}} & 47.67\textsubscript{\tiny{3.92}} & 85.95\textsubscript{\tiny{3.61}} & 71.27
    & 75.36\textsubscript{\tiny{0.98}} & 33.85\textsubscript{\tiny{1.26}} & 89.93\textsubscript{\tiny{0.65}} & 66.38 \\
    ODIN 
    & 80.33\textsubscript{\tiny{2.46}} & 49.12\textsubscript{\tiny{8.70}} & 86.33\textsubscript{\tiny{1.62}} & 71.93
    & 75.30\textsubscript{\tiny{1.14}} & 33.56\textsubscript{\tiny{1.83}} & 90.36\textsubscript{\tiny{0.55}} & 66.41 \\
    Mahalanbis 
    & 78.84\textsubscript{\tiny{1.71}} & 43.20\textsubscript{\tiny{4.95}} & 88.31\textsubscript{\tiny{2.20}} & 70.12
    & 75.19\textsubscript{\tiny{0.41}} & 33.62\textsubscript{\tiny{0.97}} & 90.71\textsubscript{\tiny{0.78}} & 66.51 \\ 
    \midrule
    \rowcolor{mygray}
    \textbf{OLT(Ours)} & 
    \textbf{82.89}\textsubscript{\tiny{0.94}} & \textbf{58.51}\textsubscript{\tiny{3.89}} & \textbf{89.26}\textsubscript{\tiny{1.52}} & 
    \textbf{76.89} &
    \textbf{75.92}\textsubscript{\tiny{1.16}} & \textbf{35.53}\textsubscript{\tiny{2.28}} & \textbf{91.36}\textsubscript{\tiny{0.34}} &
    \textbf{67.60}\\
    \bottomrule
    \end{tabular}
    \caption{\textbf{Main Results} of comparison between Open-world Lottery Ticket (OLT) and other competitive OOD detection algorithms. \textbf{ACC} is used to measure the overall performance of the model, including both OOD detection and the identification of IND specific class. All reported results are percentages and mean by conducting with different seeds (The subscripts are the corresponding standard deviations).}
    \label{tab-main-result}
\end{table*}

\subsection{Evaluation Metrics and Baselines}
\label{evaluation-metrics}
For all the above datasets, we treat all OOD samples as one rejected class as following previous works~\cite{DBLP:conf/iclr/LiangLS18,zhou-etal-2022-knn}. To evaluate the performance of our method fairly, we follow previous work~\cite{DBLP:conf/iclr/LiangLS18,DBLP:conf/nips/SunGL21} and adopt two widely used metrics: \\
\textbf{TNR at 95\% TPR (TNR95)} (TNR is short for ture negative rate) is to measure the probability that OOD is correctly detected correctly when the true positive rate (TPR) is up to 95\%. \\
\textbf{Area Under the Receiver Operating Charateristic curve (AUROC)} is a threshold-free metric, which reflects the probability of OOD being recognized as OOD is greater than that of IND. A greater value suggests better performance.\\
\textbf{ACCURACY (ACC)} In addition, to better evaluate the overall performance of our method, that is, in addition to detecting OOD, it should effectively identify the specific class of IND. Therefore, We also introduce ACC for all categories.

We extensively compare our method with as many competitive OOD detection algorithms (scoring functions) as possible. 
The entire baseline can be roughly grouped into the following categories:~\textbf{MSP}~\cite{DBLP:conf/iclr/HendrycksG17}, \textbf{MaxLogit}~\cite{hendrycks2019scaling}, \textbf{Energy}~\cite{DBLP:journals/corr/abs-2010-03759}, \textbf{Entropy}~\cite{9052492} are functions of logits. \textbf{ODIN}
~\cite{DBLP:conf/iclr/LiangLS18} are functions of calibrated logits and \textbf{Mahalanbis distance}~\cite{DBLP:journals/corr/abs-1807-03888} is a function of feature. All baselines are introduced in Section~\ref{related-work}. For a fair comparison, the network backbone (\textbf{BERT}) and training loss function (\textbf{Cross-Entropy loss}) of all methods are consistent. All methods do not use or construct additional OOD samples during the training process.

\subsection{Experimental Setting}
\label{sec:expri-setting}
For data preprocessing, we follow previous work~\cite{zhou-etal-2022-knn}. For the dataset Banking and Stackoverflow (the datasets do not contain a specified OOD class), we randomly select 75\% of the whole intent classes as IND, get rid of other classes (remaining 25\%) in the train set (also in verification set), and unify the abandon classes as OOD in the test set. For Clinc-Full and Clinc-Small, we use the specific OOD class included in the dataset itself without additional processing. During the training, we do not utilize any prior knowledge about OOD.

For the network backbone, we use the BERT (bert-uncased, with 12-layer transformer block) provided by Huggingface Transformers.
The parameters we used are also widely recommended. We used an AdamW optimizer with a batch size of 32 and tried learning late in $\{1e-5, 2e-5, 5e-5\}$. In the finetune stage, we trained BERT for 30 epochs. During retraining subnetwork, we tried epochs in $\{15, 20, 30\}$(less than epochs in finetuning).
In practice, satisfactory performance can be achieved by just masking the parameters of specific layers. To train efficiently and achieve better performance, we introduce a hyper-parameter to help specify which layer parameters need to be masked. All experiments are conducted in the Nvidia GeForce RTX-2080 Graphical Card with 11G graphical memory.

\section{Main Results}
\textbf{Main Results} Table~\ref{tab-main-result} shows the comparison of the lottery ticket uncovered from \textbf{BERT} and other competitive OOD detection methods on different datasets. The highlighted results are the best and demonstrate our method can be better than other methods on different datasets and metrics. The results also show that our method can not only ensure the identification of IND but also detect OOD more effectively. At the same time, it can be seen from the above baselines that some detection methods, such as \textit{Energy}, are very competitive. In subsequent experiments, we found that the combination of the lottery ticket and these methods can also achieve better results than the original, which further verifies our proposed Open-world Lottery Ticket Hypothesis. All reported results are average by conducting at least three rounds with different seeds.

\begin{figure}[!ht]
    \centering
    \subfigure{
    \includegraphics[width=0.95\linewidth]{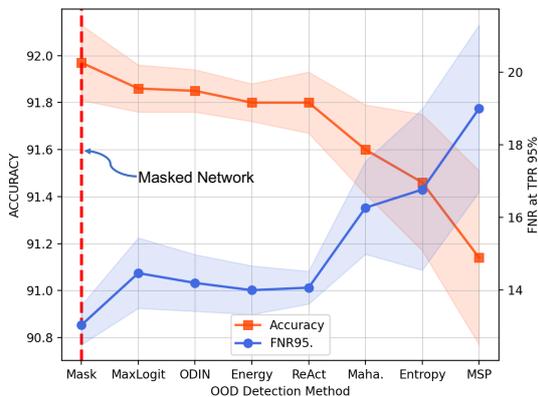}}
    \caption{Masked Network vs. Baselines (Clinc-Full). The red dotted line marks the performance of masked network (subnetwork). Left Y-axis represents the accuary and Right Y-axis represents the FNR@95\%TPR (The lower the value, the better).}
    \label{Mask-only}
\end{figure}

\section{Analysis and Discussions}
\subsection{A Mask for OOD Intent Classification}
\label{mask-model}
In the above process of finding lottery tickets, we need to retrain the subnetwork after resetting unmasked parameters to the original initialization. According to the previous analysis, the subnetwork could also be better calibrated than the original network. Can we get competitive results without retraining? We have also explored this. As suggested in~\citet{DBLP:journals/corr/abs-1712-01312}, we obtain mask $M$ and an entry $m_i\in M$ calculated as:
\begin{align}
    \label{eq:mask}
    & m_i = \min(1, \max(0, \sigma(\log\alpha_{i})(\zeta - \gamma) + \gamma)),
\end{align}
where $\alpha_{i}$ is the parameters in random variable $s_i$ in Eq.~\eqref{eq:reparam-1} , $\sigma$ is Sigmoid function. This operation can also be regarded as $\mathcal{L}_{0}$ norm regularization constraint on parameters and see~\citet{DBLP:journals/corr/abs-1712-01312} for details.

As shown in Fig~\ref{Mask-only}, we can also uncover a masked network (subnetwork) that can effectively detect OOD while maintaining the performance of IND identification. (For a clearer demonstration, we adopt FNR95 metric here to measure OOD detection ability. The lower the value, the better).~\footnote{\text{FNR95} is to measure the probability that OOD is wrongly detected when the TPR is up to 95\%.}The experimental results are consistent with our previous claim. 

Furthermore, masking without retraining can also achieve satisfactory results. \textbf{Is retraining necessary?} Our preferred answer is necessary. We surmise that retraining can learn parameters that are more suitable for the structure of the subnetwork. We hope that our experiments can inspire further theoretical or empirical research.

\begin{table}[!ht]
    \centering
    \setlength\tabcolsep{1.5pt}
    \begin{tabular}{l | c c | c c}
    \toprule
    \multirow{2}{*}{\textbf{Methods}} & \multicolumn{2}{c}{\textbf{Banking}} &\multicolumn{2}{c}{\textbf{Stackoverflow}} \\
    \cmidrule{2-3} \cmidrule{4-5} 
    ~ & ACC & Auroc & ACC & Auroc \\
    \midrule
    MaxLogit 
    & \small{80.28}\textsubscript{2.28} & 
    \small{86.40}\textsubscript{1.41} & 
    \small{75.25}\textsubscript{1.27} & 
    \small{90.08}\textsubscript{0.94} \\ 
    \rowcolor{mygray}
    \textbf{OLT+}\small{MaxLogit} 
    & \small{\textbf{83.03}}\textsubscript{\tiny{0.96}} & \small{\textbf{89.54}}\textsubscript{\tiny{1.67}} & \small{\textbf{75.92}}\textsubscript{\tiny{1.13}} & \small{\textbf{91.32}}\textsubscript{\tiny{0.39}}\\
    \midrule
    Energy 
    & \small{79.71}\textsubscript{3.00} & 
    \small{86.07}\textsubscript{1.73}  & 
    \small{75.01}\textsubscript{1.41} & 
    \small{90.11}\textsubscript{0.97}\\
    \rowcolor{mygray}
    \textbf{OLT+}\small{Energy} 
    & \small{\textbf{82.66}}\textsubscript{\tiny{1.35}} & \small{\textbf{89.27}}\textsubscript{\tiny{1.88}} & \small{\textbf{75.99}}\textsubscript{\tiny{1.08}} & \small{\textbf{91.38}}\textsubscript{\tiny{0.34}}\\
    \midrule
    Entropy 
    & \small{80.18}\textsubscript{1.29} & 
    \small{85.95}\textsubscript{3.61}  & 
    \small{75.36}\textsubscript{0.98} & 
    \small{89.93}\textsubscript{0.65} \\ 
    \rowcolor{mygray}
    \textbf{OLT+}\small{Entropy} 
    & \small{\textbf{82.23}}\textsubscript{\tiny{0.53}} & \small{\textbf{87.28}}\textsubscript{\tiny{1.44}} & \small{\textbf{76.11}}\textsubscript{\tiny{1.06}} & \small{\textbf{91.08}}\textsubscript{\tiny{0.45}} \\
    \midrule
    ODIN 
    & \small{80.33}\textsubscript{2.46} & 
    \small{86.33}\textsubscript{1.62}  & 
    \small{75.30}\textsubscript{1.14} & 
    \small{90.36}\textsubscript{0.55} \\ 
    \rowcolor{mygray}
    \textbf{OLT+}\small{ODIN} 
    & \small{\textbf{82.89}}\textsubscript{\tiny{0.94}} & \small{\textbf{89.26}}\textsubscript{\tiny{1.52}} & \small{\textbf{75.92}}\textsubscript{\tiny{1.16}} & \small{\textbf{91.36}}\textsubscript{\tiny{0.34}} \\
    \midrule
    Mahalabobis 
    & \small{78.84}\textsubscript{1.71} & 
    \small{88.31}\textsubscript{2.20} & 
    \small{75.19}\textsubscript{0.41} & 
    \small{90.71}\textsubscript{0.78} \\
    \rowcolor{mygray}
    \textbf{OLT+}\small{Maha.} 
    & \small{\textbf{81.18}}\textsubscript{\tiny{1.30}} & \small{\textbf{90.34}}\textsubscript{\tiny{1.31}} & \small{\textbf{76.00}}\textsubscript{\tiny{0.69}} & \small{\textbf{91.21}}\textsubscript{\tiny{0.35}} \\
    \bottomrule
    \end{tabular}
    \caption{The Lottery ticket with various OOD Scoring. \textbf{OLT} denotes the backbone is the Open-world Lottery Ticket. Shadow represents our results.}
    \label{tab-scoring-functions}
\end{table}

\subsection{Towards Open-world Lottery Ticket}
\label{compatible-scoring}
To further verify the versatility of the open-world lottery ticket and our extension to the LTH, we demonstrate that the gain of the effect originates from the calibrated network itself. Therefore, we combine the lottery ticket with the various OOD scoring functions and compare performances with the original network. The results are shown in Table~\ref{tab-scoring-functions}. 
From the above results, it can be seen that since the lottery ticket provides calibrated confidence, it can be more compatible with different downstream OOD detection functions and can better differentiate the distribution IND and OOD (showing the value of Auroc is high), especially those related to softmax, such as \textit{Energy} and \textit{MaxLogit}. At the same time, due to differentiation, the lottery network can also better maintain the identification of IND to achieve a higher overall performance (showing the value of ACC is high). These experimental results are consistent with our expectations and can be used as the basis for the establishment of the Open-world Lottery Ticket Hypothesis. 

\begin{table}[!ht]
    \centering
    \setlength\tabcolsep{1.5pt}
    \begin{tabular}{l | c c | c c}
    \toprule
    \multirow{2}{*}{\textbf{Methods}} & \multicolumn{2}{c}{\textbf{Banking}} &\multicolumn{2}{c}{\textbf{Stackoverflow}} \\
    \cmidrule{2-3} \cmidrule{4-5} 
    ~ & ACC & TNR95 & ACC & TNR95 \\
    \midrule
    MaxLogit 
    & \small{76.51}\textsubscript{0.48} & 
    \small{41.14}\textsubscript{4.16} & 
    \small{72.81}\textsubscript{0.53} & 
    \small{32.09}\textsubscript{2.27} \\ 
    \rowcolor{mygray}
    \textbf{OLT+}\small{MaxLogit} 
    & \small{\textbf{76.65}}\textsubscript{\tiny{0.94}} & \small{\textbf{43.02}}\textsubscript{\tiny{2.16}} & \small{\textbf{73.19}}\textsubscript{\tiny{1.07}} & \small{\textbf{32.31}}\textsubscript{\tiny{3.74}}\\
    \midrule
    Energy 
    & \small{\textbf{76.38}}\textsubscript{0.45} & 
    \small{41.27}\textsubscript{4.14}  & 
    \small{72.95}\textsubscript{0.72} & 
    \small{32.85}\textsubscript{2.01}\\
    \rowcolor{mygray}
    \textbf{OLT+}\small{Energy} 
    & \small{76.35}\textsubscript{\tiny{1.43}} & \small{\textbf{42.41}}\textsubscript{\tiny{3.72}} & \small{\textbf{73.39}}\textsubscript{\tiny{1.26}} & \small{\textbf{33.06}}\textsubscript{\tiny{4.72}}\\
    \midrule
    Entropy 
    & \small{\textbf{75.97}}\textsubscript{0.75} & 
    \small{\textbf{38.90}}\textsubscript{5.13}  & 
    \small{72.34}\textsubscript{0.57} & 
    \small{30.44}\textsubscript{2.12} \\ 
    \rowcolor{mygray}
    \textbf{OLT+}\small{Entropy} 
    & \small{75.69}\textsubscript{\tiny{0.56}} & \small{38.73}\textsubscript{\tiny{1.08}} & \small{\textbf{72.96}}\textsubscript{\tiny{0.96}} & \small{\textbf{31.54}}\textsubscript{\tiny{2.93}} \\
    \midrule
    ODIN 
    & \small{76.51}\textsubscript{0.55} & 
    \small{41.23}\textsubscript{4.50}  & 
    \small{72.57}\textsubscript{0.60} & 
    \small{31.27}\textsubscript{2.58} \\ 
    \rowcolor{mygray}
    \textbf{OLT+}\small{ODIN} 
    & \small{\textbf{76.71}}\textsubscript{\tiny{0.96}} & \small{\textbf{43.20}}\textsubscript{\tiny{2.22}} & \small{\textbf{73.26}}\textsubscript{\tiny{1.08}} & \small{\textbf{32.53}}\textsubscript{\tiny{4.01}} \\
    \midrule
    Mahalabobis 
    & \small{7.83}\textsubscript{1.75} & 
    \small{5.88}\textsubscript{2.85} & 
    \small{72.73}\textsubscript{0.67} & 
    \small{31.58}\textsubscript{2.91} \\
    \rowcolor{mygray}
    \textbf{OLT+}\small{Maha.} 
    & \small{\textbf{76.35}}\textsubscript{\tiny{0.49}} & \small{\textbf{40.70}}\textsubscript{\tiny{2.48}} & \small{\textbf{73.24}}\textsubscript{\tiny{0.74}} & \small{\textbf{32.62}}\textsubscript{\tiny{4.22}} \\
    \bottomrule
    \end{tabular}
    \caption{The Open-world Lottery ticket identified from RoBERTa with various OOD scoring functions.}
    \label{tab:resuls-other-model}
\end{table}

\subsection{Generality of Open-World Lottery Ticket}
\label{general-ticket}
In Section~\ref{compatible-scoring}, we have empirically verified Open-world the Lottery Ticket Hypothesis in BERT. In this section, we explore the generality of the Open-world Lottery Ticket Hypothesis and take RoBERTa~\cite{DBLP:journals/corr/abs-1907-11692} as an example to verify whether the Open-world Lottery Ticket Hypothesis is also valid in other models. First of all, We prune a lottery network (OLT) from RoBERTa according to our proposed method in Section~\ref{find-lottery-ticket}.
Then, we replace different \textit{post hoc} scoring functions and make a comprehensive comparison, as we did in Section~\ref{compatible-scoring}. The results are shown in Table~\ref{tab:resuls-other-model}. From the table, we can see that the lottery network discovered from RoBERTa can be also compatible with various scoring functions.
We have preliminarily verified the generality of OLTH, and we hope that the follow-up work will bring more theoretical and experimental research.

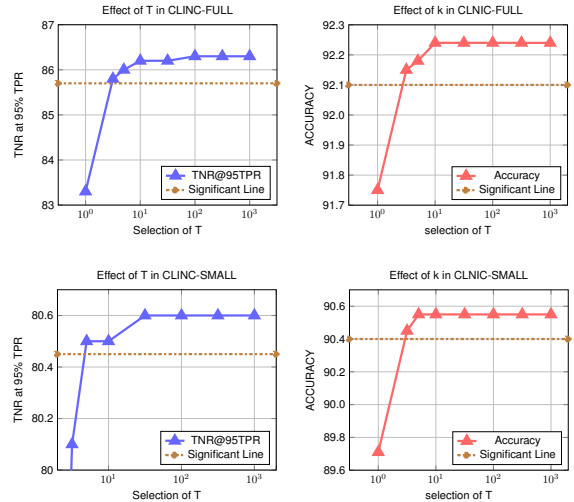
\begin{figure}[!h]
    \centering
    \subfigure{
        \centering
        \begin{tikzpicture}[scale=0.42]
            \begin{axis}[
                xlabel=Selection of T,
                ylabel=TNR at 95\% TPR,
                xmin=-0.5,xmax=3.5,
                ymin=83,ymax=87,
                xtick = {0.0,1.0,2.0,3.0,4.0},
                ytick = {83,84,85,86,87},
                xticklabels={$10^0$,$10^1$,$10^2$, $10^3$,$10^4$},
                yticklabels={83,84,85,86,87},
                grid=major,
                title = Effect of T in CLINC-FULL,
                legend pos=south east
              ]
            \addplot[color=blue!60, line width=2pt, mark=triangle*, mark size=5pt] coordinates {
                (0.0, 83.3)
                (0.5, 85.8)
                (0.7, 86.0)
                (1.0, 86.2)
                (1.5, 86.2)
                (2.0, 86.3)
                (2.5, 86.3)
                (3.0, 86.3)
            };
            \addlegendentry{TNR@95TPR}
            \addplot[color=brown, line width=2pt, mark=square*, mark size=2pt, dash dot] coordinates {
                (-0.5,85.7)
                (3.5,85.7)
            };
            \addlegendentry{Significant Line}
            \end{axis}
        \end{tikzpicture}
    }
    \subfigure{
        \centering
        \begin{tikzpicture}[scale=0.42]
            \begin{axis}[
                xlabel=selection of T,
                ylabel=ACCURACY,
                xmin=-0.5,xmax=3.3,
                ymin=91.7,ymax=92.3,
                xtick = {0.0,1.0,2.0,3.0,4.0},
                ytick = {91.7,91.8,91.9,92.0,92.1,92.2,92.3},
                xticklabels={$10^0$,$10^1$,$10^2$, $10^3$,$10^4$},
                yticklabels={91.7,91.8,91.9,92.0,92.1,92.2,92.3},
                grid=major,
                title = Effect of k in CLNIC-FULL,
                legend pos=south east
              ]
            \addplot[color=red!60, line width=2pt, mark=triangle*, mark size=5pt] coordinates {
                (0.0, 91.75)
                (0.5, 92.15)
                (0.7, 92.18)
                (1.0, 92.24)
                (1.5, 92.24)
                (2.0, 92.24)
                (2.5, 92.24)
                (3.0, 92.24)
            };
            \addlegendentry{Accuracy}
            \addplot[color=brown, line width=2pt, mark=square*, mark size=2pt, dash dot] coordinates {
                (-0.5,92.10)
                (3.3,92.10)
            };
            \addlegendentry{Significant Line}
            \end{axis}
        \end{tikzpicture}
    }
    \subfigure{
        \centering
        \begin{tikzpicture}[scale=0.42]

            \begin{axis}[
                xlabel=Selection of T,
                ylabel=TNR at 95\% TPR,
                xmin=0.3,xmax=3.3,
                ymin=80,ymax=80.7,
                xtick = {0.0,1.0,2.0,3.0,3.3},
                ytick = {80,80.2,80.4,80.6},
                xticklabels={$10^0$,$10^1$,$10^2$, $10^3$},
                yticklabels={80,80.2,80.4,80.6},
                grid=major,
                title = Effect of T in CLINC-SMALL,
                legend pos=south east
              ]
            \addplot[color=blue!60, line width=2pt, mark=triangle*, mark size=5pt] coordinates {
                (0.0, 75.5)
                (0.5, 80.1)
                (0.7, 80.5)
                (1.0, 80.5)
                (1.5, 80.6)
                (2.0, 80.6)
                (2.5, 80.6)
                (3.0, 80.6)
            };
            \addlegendentry{TNR@95TPR}
            \addplot[color=brown, line width=2pt, mark=square*, mark size=2pt, dash dot] coordinates {
                (0.3,80.45)
                (3.3,80.45)
            };
            \addlegendentry{Significant Line}
            
            \end{axis}
        \end{tikzpicture}
    }
    \subfigure{
        \centering
        \begin{tikzpicture}[scale=0.42]

            \begin{axis}[
                xlabel=selection of T,
                ylabel=ACCURACY,
                xmin=-0.5,xmax=3.3,
                ymin=89.6,ymax=90.7,
                xtick = {0.0,1.0,2.0,3.0,4.0},
                ytick = {89.6,89.8,90.0,90.2,90.4,90.6},
                xticklabels={$10^0$,$10^1$,$10^2$, $10^3$,$10^4$},
                yticklabels={89.6,89.8,90.0,90.2,90.4,90.6},
                grid=major,
                title = Effect of k in CLNIC-SMALL,
                legend pos=south east
              ]
            \addplot[color=red!60, line width=2pt, mark=triangle*, mark size=5pt]coordinates {
                (0.0, 89.71)
                (0.5, 90.45)
                (0.7, 90.55)
                (1.0, 90.55)
                (1.5, 90.55)
                (2.0, 90.55)
                (2.5, 90.55)
                (3.0, 90.55)
            };
            \addlegendentry{Accuracy}
            \addplot[color=brown, line width=2pt, mark=square*, mark size=2pt, dash dot] coordinates {
                (-0.5,90.40)
                (3.3,90.40)
            };
            \addlegendentry{Significant Line}
            \end{axis}
        \end{tikzpicture}
    }
    \caption{Effect of Temperature Scaling. As T becomes larger, the benefits brought by T will soon become smaller.}
    \label{fig-effect-of-temperature}
\end{figure}

\subsection{Analysis On Temperature Scaling}
\label{temperature-scaling}
In Theorem~\ref{theorem}, we demonstrate that temperature scaling can help differentiate the distribution between IND and OOD. Let us take a closer look at the behavior of temperature here. In the previous work~\cite{DBLP:conf/iclr/LiangLS18}, it is suggested to take a sufficiently larger value of temperature. However, from the proof of Theorem~\ref{theorem}, it can be seen that temperature just needs to be greater than 1.
We choose temperatures at different scales to test the effect (of temperature) on different data sets and the results are shown in Figure~\ref{fig-effect-of-temperature}. We find that after $T>1$ (without a large value), the effect of OOD detection is very significant. As T becomes larger, the benefits brought by T will soon become smaller, which is in line with our expectations.

\section{Conclusion and Future Work}
\label{sec:conclusion}
Does the model know what it does not know? This paper makes an in-depth discussion of the theory and the practice. Firstly, we discuss the reasons that prevent the model from giving trustworthy confidence. 
Then, we uncover a subnetwork from an overparameterized model to provide calibrated confidence (helpful to differentiate in IND and OOD). In addition, We prove that temperature scaling can help distinguish IND and OOD. Combined with calibrated confidence of subnetwork and temperature scaling, we further extend the LTH to the open-world empirically and verify our conjecture by experiments.

In a larger scope, the research of this paper can be categorized more broadly into the knowledge boundary (or capability boundary) of models, which is a fundamental and critical issue in the deep learning field. With the unprecedented prevalence of artificial intelligence in recent years, research on the knowledge boundary of models, especially large-scale pre-trained language models, has drawn strong attention from scholars in academia and industry, and the research scope has become more extensive~\cite{DBLP:journals/corr/abs-2207-05221,yin-etal-2023-large,Cheng2024CanAA}.

First, in terms of model structure, existing research is increasingly focusing on large-scale generative architectures~\cite{Touvron2023Llama2O}. Can the open-world lottery ticket also be found in generative models? According to current research~\cite{azaria-mitchell-2023-internal}, the answer seems to be affirmative. Then, from the perspective of task form, this study focuses on identifying the capability boundary of models. In practical scenarios, it is equally important to extend the capability boundary of models. For this purpose, a class of research~\cite{zhou-etal-2023-towards-open} has extended the task paradigm by collecting corpora that are not within the capability boundary of the model after identification. These corpora can be further fine-grained discovered~\cite{Zhang_Xu_Lin_Lyu_2021, zhou-etal-2023-probabilistic} to enhance the capability boundary of the model in practical scenarios.
The proposed open-world lottery ticket mainly aims to enhance the cognitive boundary of the model. How can it further adapt to the extension of the model's capability? 

Finally, as the parameter scale of models becomes increasingly larger, the inference speed of models is gradually becoming a bottleneck for their application in practical scenarios. Existing work~\cite{zhou-etal-2023-two} has preliminarily discovered that model inference optimization can not only improve the inference speed but also maintain their overall performance. How to perform inference optimization based on the \textit{Open-World Lottery Ticket Hypothesis} is also a direction worth paying attention to.

\section*{Acknowledgments}
This work was supported by the National Key Research and Development Program of China (No.2022ZD0160102).  
\nocite{*}
\section{Bibliographical References}\label{sec:reference}

\bibliographystyle{lrec-coling2024-natbib}
\bibliography{lrec-coling2024-example}
\bibliographystylelanguageresource{lrec-coling2024-natbib}
\bibliographylanguageresource{languageresource}

\end{document}